
\documentclass[10pt,twocolumn,letterpaper]{article}
\usepackage{times}
\usepackage{epsfig}
\usepackage{graphicx}
\usepackage{amsmath}
\usepackage{amssymb}
\usepackage{subcaption}
\usepackage{amsthm}
\date{}

\usepackage[breaklinks=true,bookmarks=false]{hyperref}




\usepackage{xifthen}
\usepackage{multirow}
\newtheorem{example}{Example}
\newtheorem{definition}{Definition}

\newtheorem{theorem}{Theorem}

\DeclareMathOperator{\mean}{AM}
\DeclareMathOperator{\doff}{doff}
\DeclareMathOperator{\dof}{dof}
\DeclareMathOperator{\argmax}{Argmax}
\DeclareMathOperator{\cnn}{\mathsf{\it{CNN}}}
\DeclareMathOperator{\conv}{Conv}

\DeclareMathOperator{\relu}{ReLU}
\DeclareMathOperator{\maxpool}{Pool}
\DeclareMathOperator{\ws}{WS}

\newcommand{\pmax}{p_{\it{max}}}
\newcommand{\bias}[2][]{\overline b^{(#2)}_{#1}}
\newcommand{\weight}[2][]{W^{(#2)}_{#1}}
\newcommand{\f}[1]{f^{(#1)}}
\newcommand{\domain}[1]{\mathcal D^{(#1)}}
\newcommand{\s}[2][]{s^{(#2)}_{#1}}
\newcommand{\val}[2][]{x^{(#2)}_{#1}}
\newcommand{\vval}[2][]{\overline x^{(#2)}_{#1}}
\newcommand{\convf}[2][]{\conv^{(#2)}_{#1}}
\newcommand{\convff}[2][]{\conv^{(#2),#1}}
\newcommand{\convfff}[3]{\conv^{(#2),#3}_{#1}}
\newcommand{\maxpoolf}[1]{\maxpool^{(#1)}}
\newcommand{\kernel}[2]{K^{(#1)}_{#2}}
\newcommand{\kernell}[2]{K^{(#1),#2}}

\newcommand{\pdelta}[2][]{\delta^{(#2)}_{#1}}
\newcommand{\tr}[2][]{\mathfrak T^{(#2)}_{#1}}
\newcommand{\con}[1]{\mathcal C^{(#1)}}

\newcommand{\im}{\overline{im}}

\newcommand{\set}[1]{\left \{ #1  \right \}}


\begin{document}
\title{Formal Verification of CNN-based Perception Systems}
\author{Panagiotis Kouvaros {\small and } Alessio Lomuscio\\
Department of Computing, Imperial College London, UK\\
{\tt\small \{p.kouvaros, a.lomuscio\}@imperial.ac.uk}}
\maketitle
\begin{abstract}
We address the problem of verifying neural-based perception systems
implemented by convolutional neural networks. We define a notion of local
robustness based on affine and photometric transformations. We show the
notion cannot be captured by previously employed notions of robustness. The
method proposed is based on reachability analysis for feed-forward neural
networks and relies on MILP encodings of both the CNNs and transformations
under question. We present an implementation and discuss the experimental
results obtained for a CNN trained from the MNIST data set.

\end{abstract}

\section{Introduction}\label{sec:intro}
Autonomous systems are forecasted to revolutionise key aspects of modern
life including mobility, logistics, and beyond. While considerable
progress has been made on the underlying technology, severe concerns
remain about the safety of the autonomous systems under
development. 

One of the difficulties with forthcoming autonomous systems is that
they incorporate complex components that are not programmed by engineers,
but are synthesised from data via machine learning methods (often
called \emph{learning-enabled components}).  Additionally, there is an
increasing trend to deploy autonomous systems in safety-critical areas
including autonomous vehicles. These two aspects taken together call for
the development of rigorous methods for the formal verification of
autonomous systems based on learning-enabled components, see,
e.g., DARPA's Assured Autonomy
program~\footnote{\url{https://www.darpa.mil/program/assured-autonomy}.}.

One of the most widely considered architectures combining
traditionally programmed and learning-enabled components is control
systems (or decision making and planning modules) paired with
perception systems based on data-trained classifiers. Forthcoming
autonomous vehicles, among other systems, use this architecture. While
there are methods to formally assess the adequacy of decision making
and control components, no existing technique can provide formal
guarantees about the correctness of perception systems based on
machine learning methods. Not only this makes it hard to formally  establish
 the correctness of perception systems before deployment, it
also impedes the provision of formal assurances on the overall
autonomous system, since the control components
normally receive input from the classifiers of the perception
component.

In this paper we aim to make a contribution towards solving this
problem. Specifically, we introduce a method to give formal guarantees
to perception systems based on convolutional neural networks
(CNNs)~\cite{GoodfellowBengioCourville16}. Towards this end, we
develop a concept of robustness for CNN-based perception systems with
respect to affine and photometric transformations, i.e., changes in
the position and colour patterns of the image. These can be used to
anticipate the natural variability of the input to the perception
mechanism when the system is deployed. Intuitively, establishing that
for all the possible transformations of the input (up to a certain
measure) the CNN classifies the image in the same way helps us
understand the robustness of the perception system with respect to
said transformations.
Moreover, whenever misclassifications are found, these can usefully be
used to retrain the network and improve its accuracy.  The method we
introduce relies on recasting the problem in mixed-integer linear
programming (MILP) and leverages on recent work on reachability
analysis for feed-forward neural networks, which we extend here to
CNNs. Two key features of our approach which make it different from
recent approaches discussed below are that: (i) we study robustness
with respect to image transformations as opposed to pixel alterations,
(2) the method is complete, i.e., if a missclassification exists with
respect to a transformation the method is theoretically guaranteed to
find it. Completeness is particularly important for the purposes of
guaranteeing correctness.


The rest of the paper is organised as follows. After discussing
related work below, in Section~\ref{sec:preliminaries} we introduce
CNNs and image transformations. In Section~\ref{sec:robustness} we
develop the concept of local transformational robustness and give
examples of applications.  We turn to the verification problem for
CNN-based classifiers in Section~\ref{sec:verification} where we
construct the MILP encoding of the problem and show the soundness and
completeness of the approach. We present an implementation in
Section~\ref{sec:experiments} and report experimental results obtained
for a CNN trained from the MNIST dataset. We conclude in
Section~\ref{sec:conclusions}.

{\bf Related Work.} The notions of affine and photometric transformations
are well understood in the
literature~\cite{HartleyZisserman03,SonkaHlavacBoyle14}. Classifiers are
normally evaluated in statical terms against false positives and false
negatives~\cite{Bishop06}. While these studies are relevant, they do not
have the same aim as ours, which is to formally assess the correctness of a
classifier with respect to image transformations.

More relevant to our aims is the recent work~\cite{Gehr+18} on the
verification of feed-forward networks, which includes a section on the
local robustness of CNN-based classifiers. This works differs from
ours in several respects. Firstly, local robustness, as defined there,
captures regions (or abstract domains) of image perturbations.
Differently, we here deal with a wide range of geometric and
photometric transformations.  If we were to express these by means of
abstract domains, we would have to over-approximate the input domain,
thereby obtaining robustness violations not relevant to the
transformations under analysis. Instead, by developing the concept of
local transformational robustness we can provide tighter notions of
correctness than by purely working on local robustness.
The technical details of the two approaches are also
different. While~\cite{Gehr+18} uses abstract interpretation as
underlying method we here compile into MILP problems. While abstract
interpretation provides better performance than MILP, it comes at the
cost of completeness, which is a key requirement for the purposes of
verification.

Related to this, there is a fast increasing body of research addressing
reachability and robustness analysis for neural
networks~\cite{Katz+17,Ehlers17,Narodytska+17,LomuscioMaganti17,Huang+17}.
While our approach is similar in terms of the underlying method, the aims
are different since, rather than analysing reachability  and robustness in
abstract terms, we here intend to contribute to the concrete problem of
verifying learning-enabled perception modules. When perception systems
are considered in these works, they are assessed with respect to
pixel variations of the images and not transformations as we do here. 
We are not the first to suggest that image transformations should be
considered when assessing the robustness of image classifiers. This
point was raised very recently also by~\cite{Seshia+18} (referred to
there as ``semantic invariance''). However, that work is intended as
initial foundational work and proposes no method to solve the problem.

Related to this submission is also the unpublished~\cite{Pei+17} which
proposes a generic framework for evaluating the safety of computer
vision systems. In common with this paper it aims to establish
correctness of a classification with respect to image
transformations. Differently from ours, their approach however is
based on heuristic search for counterexamples. While it is shown to
provide advantages over previous state-of-the-art based on
gradient-based methods, it is necessarily incomplete, i.e., the method
cannot give guarantees that if a missclassifications exists this will
be found. In contrast we here focus on completeness as a requirement,
i.e., we guarantee that (up to computational power) \emph{any and all}
missclassifications will be found by the method. We cannot compare
experimental results to~\cite{Pei+17} as the paper is unpublished and
we do not have access to the tool described therein.

Lastly, as it will be clear from the rest of the paper, by using the
results here presented, whenever we identify that a perception module is
not robust, i.e., it classifies images differently following small
transformations, our procedures return the witness of this
misclassification. This can be seen as an adversarial example, see,
e.g.~\cite{Papernot+16}, for the relevant classifier. Research in
adversarial examples  directly aims at finding these counterexamples.
However, none of these methods is complete, hence they cannot be used to
show formal correctness which is the primary aim of this work.

\section{Preliminaries}\label{sec:preliminaries}
Learning-enabled components for perception are typically image
classifiers implemented via CNNs. We here summarise CNNs, CNN-defined
image classifiers,  and define image transformations as distortions.
We refer to~\cite{GoodfellowBengioCourville16,Rushakovsky+15} for more details.


{\em CNNs} are directed acyclic graphs whose nodes are structured in
  layers.  The first layer is said to be the {\em input layer}, the
  last layer is referred to as the {\em output layer}, and every layer
  in-between is called a {\em hidden layer}.  Every layer is either a
  {\em fully connected layer} or a {\em convolutional layer}.

Each node of a fully connected layer is  connected to every node in the
preceding layer, whereas every node of a convolutional layer is only
connected to a {\em rectangular neighbourhood} of nodes in the preceding
layer. In either case every connection is  associated with a weight.  

A fully connected layer is composed of two phases, whereas a convolutional
layer comprises three phases. In both cases the
first phase linearly activates the nodes by outputting the weighted sum of
their inputs, and the second phase applies a non-linear {\em activation
function} to the linear activation. Here we consider the Rectified Linear
Unit (ReLU) whose output is  the maximum between~0 and the linear
activation. The third phase of a convolutional layer further
applies a {\em pooling function} to collapse rectangular neighborhoods of
activations into {\em representative activations}.  Here we focus on the
max-pooling function where the maximum activation is chosen as the
representative one.


CNNs are routinely used to solve image classification problems. The
problem concerns the approximation of an unspecified function
$f^\star \colon
\mathbb R^{\alpha \times \beta \times \gamma} \rightarrow
\set{1,\ldots,c}
$
that takes as input an image $\im$ from an unknown distribution
$\mathbb R^{\alpha \times \beta \times \gamma}$ (where $\alpha \times
\beta$ are the pixels of the image and $\gamma$ is the number of the
color bands, e.g.  RGB) and determines among a set of classes
$\set{1,\ldots,c}$ the class to which $\im$ is a member. The problem
is tackled by training a CNN by means of a labelled training set
thereby setting the weights of the network so that its output
approximates~$f^\star$~\cite{GoodfellowBengioCourville16}.  Following
this, the CNN can be used to infer the class of novel images by
feeding an image to the input layer and then propagating it through
the network. At each step the outputs of the nodes in a layer are
computed from the outputs of the previous layer, resulting in the
input to the output layer. The output layer is a fully connected layer
that composes precisely a node per label class and whose activation
function outputs the index of the node with the largest linear
activation as the classification decision.  In the following we assume
the weights of a CNN have been trained.

To fix the notation, we use $[n]$ for $\set{1,\ldots,n}$ and
$[n_1,n_2]$ for $\set{n_1,\ldots,n_2}$.  We fix a CNN $\cnn$ with a
set of layers $[n]$. We assume the nodes in a convolutional layer
are arranged into a three dimensional array.  Interchangeably we
sometimes treat this arrangement as reshaped into a vector. The nodes
in a fully connected layer are arranged into a vector.  The output of
the $(j,k,r)$-st ($j$-th, respectively) node in a convolutional (fully
connected, respectively) layer~$i$ is represented by $\val[j,k,r]{i}$
($\val[j]{i}$, respectively). We denote by $\vval[]{i}$ the vector of
all the nodes' outputs in layer~$i$.  We write $\s i$ for the size of
layer~$i$ and $\s[j]{i}$ for the size of the~$j$-th dimension of a
convolutional layer~$i$.

We now proceed to formally define every layer $2 \leq i \leq n$ as a
function $\f i \colon \mathbb R^{\s[]{i-1}} \rightarrow \mathbb R^{\s i}$
and consequently the CNN composing these.  We begin with a fully connected
layer$~i$. The layer is associated with a weight matrix $\weight[]{i} \in
\mathbb R^{\s[]i \times \s[]{i-1}}$ and a bias vector $\bias[]{i} \in
\mathbb R^{\s[]{i}}$.  The linear activation of the layer is given by the
weighted sum $\ws(\vval[]{i-1}) = \weight[]{i} \cdot \vval[]{i-1} +
\bias[]{i}$. The function computed by the
layer can then be defined as 
$
\f i(\vval[]{i-1}) = \Box \left(\ws\left(\vval[]{i-1}\right)\right)
$, 
where $\Box \in \set{\relu,\argmax}$ with the function $\relu(x) =
\max(0,x)$ being applied element-wise to the linear activation.  We now
give the definition of a convolutional layer~$i$. The layer is associated
with a group $\convff[1]{i},\ldots,\convff[k]{i}$ of $k \geq 1$ convolutions
and a max-pooling function $\maxpoolf i$.  Each convolution
$
\convff[j]{i} \colon \mathbb R^{\s[]{i-1}} \rightarrow
\mathbb{R}^{(\s[1]{i-1} - p +1) \times (\s[2]{i-1}-q+1)}
$
is parameterised over a weight matrix (kernel) $\kernell{i}{j} \in \mathbb
R^{p \times q \times \s[3]{i-1}}$, where $p \leq \s[1]{i-1}$, $q \leq
\s[2]{i-1}$, and a bias vector $\bias[j]{i}$. The $(u,v)$-st output of the
$j$-st convolution  is
given by
$
\convfff{u,v}{i}{j}(\vval[]{i-1}) = \kernell{i}{j} \cdot
\vval[{[u,u'],[v,v'],[\s[3]{i-1}]}]{i-1} + \bias[j]{i}, 
$
where $u' = u+p-1$ and $v' = v+q-1$. 
Given the outputs of each of the convolutions, the linear activation of the layer
$
\convf[]{i} \colon \mathbb R^{\s[]{i-1}} \rightarrow
\mathbb{R}^{(\s[1]{i-1} - p +1) \times (\s[2]{i-1}-q+1) \times k}
$
forms a three-dimensional matrix, i.e. $\convf[]{i} =
\begin{bmatrix} \convf[1]{i} & \ldots & \convf[k]{i} \end{bmatrix}$.
The non-linear activation of the layer is then computed by the application
of the $\relu$ function. In the final phase, the max-pooling
function collapses  neighborhoods of size $p' \times q'$ of the
latter activations to their maximum values. Formally,  $\maxpoolf{i} \colon
\mathbb R^{u \times v \times r} \rightarrow \mathbb R^{\frac{u}{p'} \times
\frac{v}{q'} \times r}$, where $u = (\s[1]{i-1} - p +1)$ and $v =
(\s[2]{i-1}-q+1)$, is defined as follows:
\(
\maxpoolf{i}_{u,v,r} = \max \left( \convf[{[(u-1) p +1,u\cdot
p],[(v-1)
q+1,v\cdot q],r}]{i} \right)
\)
The function computed by the layer is then defined by
\(
\f i(\vval[]{i-1}) = \maxpoolf i
\left(\relu \left(\convf[]{i}\left(\vval[]{i-1}\right)\right)\right).
\)

Given the above a convolutional neural network can be defined as the
composition of fully connected and convolutional layers.

\begin{definition}[Convolutional neural network]
	A convolutional neural network $\cnn : \mathbb R^{\alpha \times \beta
	\times \gamma} 	\rightarrow [c]$ is defined as 
	\[\cnn(\im) = \f{n}(\f{n-1}(\ldots \f{1}(\overline{x})\ldots))\]
	where  $\im \in \mathbb R^{\alpha \times \beta \times
	\gamma}$, $\f{1},\ldots,\f{n-1}$ are either $\relu$ fully connected or
	convolutional layers,  $\f{n}$ is an $\argmax$ 	fully connected
	layer, and $[c]$ is a set of class labels.
\end{definition}



{\bf Image transformations.} We briefly describe affine and photometric 
transformations.  An affine transformation is a mapping between affine
spaces that preserves collinearity and ratios of
distances~\cite{HartleyZisserman03}. It is represented in vector algebra as
follows.
\[
\begin{pmatrix}
	x' \\ y' \\ 1
\end{pmatrix}
=
\begin{bmatrix}
	a_{11} & a_{12} & t_x \\
	a_{21} & a_{22} & t_y \\
	0 & 0 & 1
\end{bmatrix}
\begin{pmatrix}
	x  \\ y \\ 1 
\end{pmatrix}
\]
where $(x,y)$ is a point, $A = \begin{bmatrix} a_{11} & a_{12} \\ a_{21} &
	a_{22} \end{bmatrix}$ is a non-singular matrix, and $\overline t =
(t_x, t_y)^T$ is the {\em translation
		vector}.

An affine transformation whereby $A$ equals the identity matrix is said to be a {\em translation}.
An affine transformation is referred to as {\em scaling} if $A = \sigma I$,
$t_x=0$ and $t_y=0$ for a scale factor $\sigma$.   In the case where
$\sigma < 1$ the scaling is called {\em subsampling},  whereas in the case
where $\sigma > 1$ the scaling is known as {\em zooming}. Given a
transformation $\mathfrak T$ we write $\doff(\mathfrak T)$ for the tuple of
degrees of freedom in $\mathfrak{T}$, e.g., if $\mathfrak T$ is a
translation, then $\doff(\mathfrak T) = (t_x,t_y)$. Given $d \in \mathbb
R^{|\doff(\mathfrak T)|}$, we denote by $\mathfrak T[d]$ the {\em
concretisation} of $\mathfrak T$ whereby every parameter $\doff(\mathfrak
T)_i$ is set to $d_i$.  In this paper we consider translations and scaling
transformations, both in isolation and in composition with themselves. 

A photometric transformation  is an affine change in the intensity of the
pixels~\cite{SonkaHlavacBoyle14}. It is defined as  $f(p) = \alpha p +
\beta$.  If  $0 < \alpha < 1$, then  the transformation reduces the
contrast of the image. Otherwise, if $\alpha > 1$, the transformation
increases the contrast of the image. The factor $\beta$ controls the
luminosity of the image with higher values pertaining to brighter images.

\begin{figure*}
    \centering
    \begin{subfigure}[b]{0.15\textwidth}
		\includegraphics[width=\textwidth]{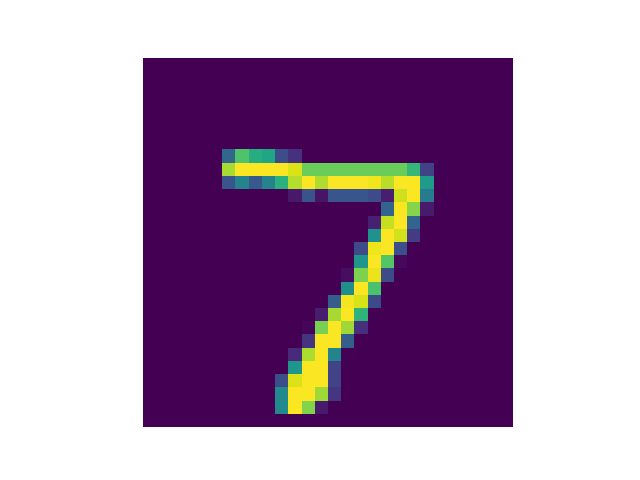}
        \caption{Original.}
        \label{fig:mnist-original}
    \end{subfigure}%
    \begin{subfigure}[b]{0.15\textwidth}
		\includegraphics[width=\textwidth]{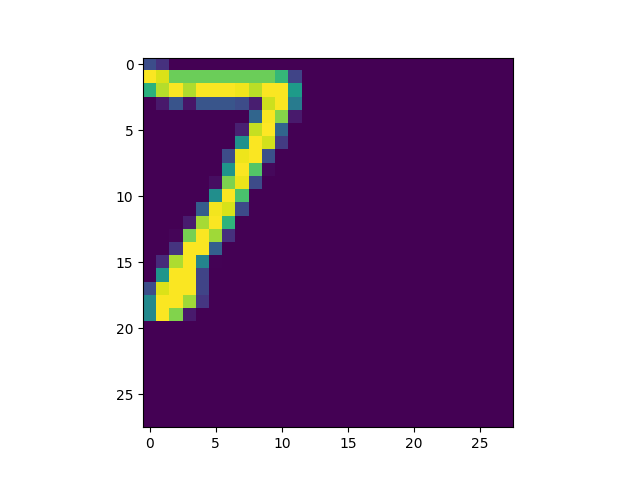}
        \caption{Translation.}
        \label{fig:mnist-translation}
    \end{subfigure}%
    \begin{subfigure}[b]{0.15\textwidth}
		\includegraphics[width=\textwidth]{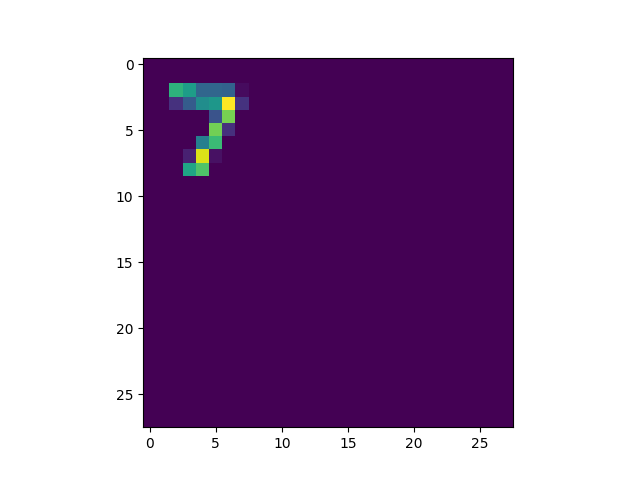}
		\caption{Subsampling.}
        \label{fig:mnist-subsample}
    \end{subfigure}%
    \begin{subfigure}[b]{0.15\textwidth}
		\includegraphics[width=\textwidth]{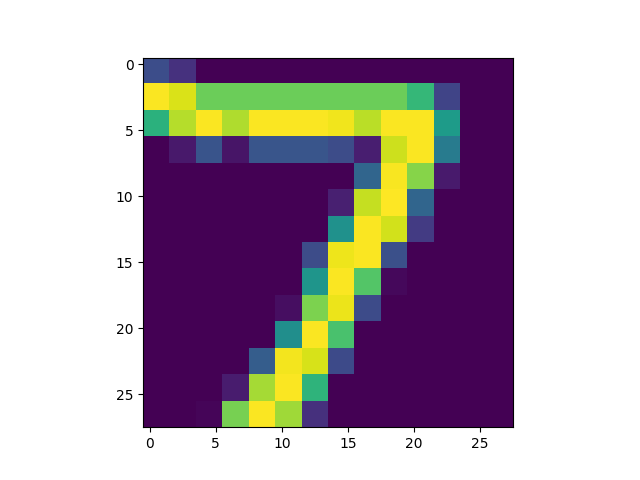}
		\caption{Zooming.}
        \label{fig:mnist-zoom}
    \end{subfigure}%
    \begin{subfigure}[b]{0.15\textwidth}
		\includegraphics[width=\textwidth]{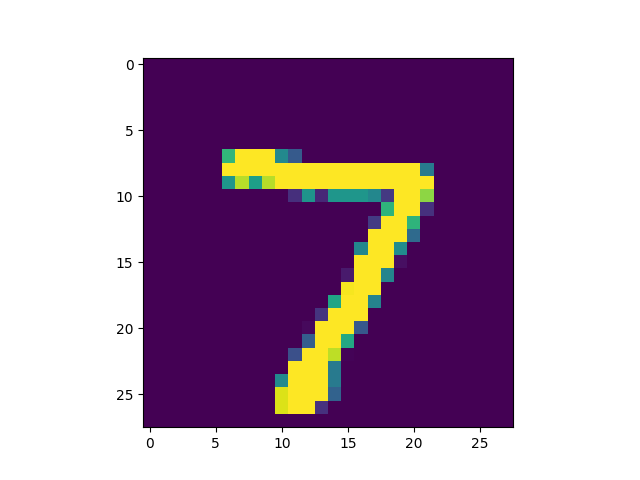}
		\caption{Luminosity.}
		\label{fig:mnist-brightness}
    \end{subfigure}%
    \begin{subfigure}[b]{0.15\textwidth}
		\includegraphics[width=\textwidth]{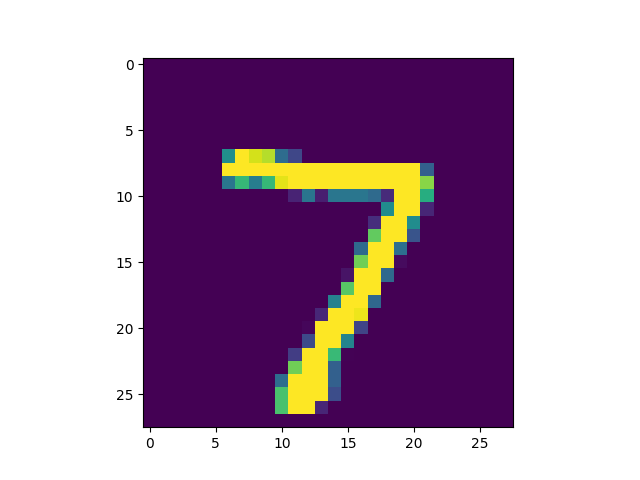}
		\caption{Contrast.}
		\label{fig:mnist-contrast}
    \end{subfigure}%
    \begin{subfigure}[b]{0.15\textwidth}
		\includegraphics[width=\textwidth]{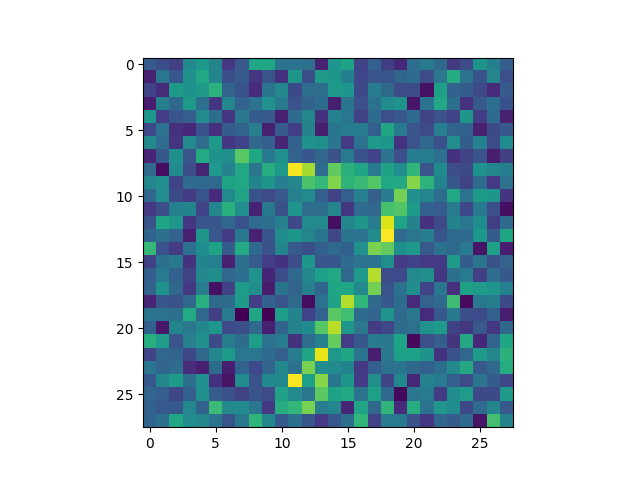}
		\caption{Random noise.}
		\label{fig:mnist-noise}
    \end{subfigure}%
	\caption{Affine and photometric transformations applied on an image
	from the MNIST dataset.} 
	\label{fig:transforms}
\end{figure*}

\section{Local transformational robustness}\label{sec:robustness}
CNNs are known to be susceptible to adversarial attacks. Slight
perturbations of an image, even imperceptible to the human eye, can
cause the misclassification of the
image~\cite{Papernot+16}. As mentioned in the introduction a
recent body of work aims at establishing the {\em adversarial local
robustness} of neural networks against said
attacks by adopting a
verification perspective~\cite{Narodytska+17,Gehr+18,Huang+17,Katz+17}.
Specifically, given an image $\im$, current
state-of-the-art methods check whether all images $\im'$ with
$\lVert \im' - \im \rVert_p \leq \delta$, for some $L_p$-norm, are
classified as belonging to the same class with $\im$.
While this approach can effectively identify probable
misclassification when images marginally differ, e.g., by a few
different pixels, we argue that it is not adequate to identify a much
wider class of variations to an image that should not
affect its classification. 

Intuitively, a perception system should be assessed as robust whenever
it assigns the same label to every pair of images that human
perception finds indistinguishable.  For instance, autonomous vehicles
are expected to correctly identify objects irrespective of the angle
and position of the camera, the environment light conditions, the
position of the object in the image, the distance of the object from
the camera, etc. It follows that the \emph{classification decision
should be the same irrespective of minor translations,
 scaling, and photometric transforms of the image from the camera}
(Figure~\ref{fig:transforms}).

Crucially, adversarial local robustness depends on the encapsulation
of all indistinguishable images within the space induced by bounding
some distance metric, typically the infinity norm.  Consequently, in
order to capture said transformations in present approaches, the
bound on the distance between the images has to be so large that,
intuitively, distinguishable images would have to be classified in the
same way to meet the predefined distance.

For example, consider Figure~\ref{fig:transforms}. The $L_\infty$-norm
between the original image and the brightened image is~17.16, whereas
the distance between the original image and the image composed with
random noise is~16.02. Clearly the brightened image should be
classified as the original one whereas the classification of the
noisy one is not as clear. Following these observations, we propose
the formal verification of the robustness of CNNs not just up to a
distance metric, but up to transformations.  We call this the {\em
transformational robustness property}.

\begin{definition}[Local transformational robustness]
	A convolutional neural network $\cnn$ is said to be 
	{\em locally transformationally robust}
	for an image $\im$ and a transformation $\mathfrak T$ with
	domain~$\mathcal D \subseteq \mathbb R^{|\doff(\mathfrak T)|}$
	if for all $d \in \mathcal D$ we have that
	$\cnn\left(\mathfrak T[d](\im)\right) = \cnn(\im)$.
\end{definition}

In other words, we say that $\cnn$ is locally transformationally robust if
the image in question is classified in the same way  following the
transformation under analysis. The emphasis on ``local'' in the definition
above follows~\cite{Katz+17,Narodytska+17} and refers to the fact that the
property is relative to a single image. In contrast, global
transformational robustness is defined independently of the image. We do
not add this definition as the method provided below only deals with local
properties. Also note that the definition above is parameterised over a
domain for the values of the parameters. Indeed, for verification purposes,
it may be preferable to check only a range of values, e.g., in photometric
transformations it is typically not relevant to consider the transformation
for which every pixel is mapped to the black colour. Hereafter, we assume
that the transformation domain is a linearly definable set.

\section{Verifying local transformational robusteness}\label{sec:verification}

In this section we develop a procedure for verifying local
transformational robustness by recasting the problem into MILP. The
procedure takes as input a trained CNN $\cnn$, an image $\im$ with
class label $l_{\im}$, and a sequence of 
$k$ transformations $\tr[]{1},\ldots,\tr{k}$  with domains
$\domain 1,
\ldots,\domain k$, respectively. It then constructs an {\em
transformational} CNN $\overline{\cnn}$ by treating the
transformations as additional layers and appending them to the input
layer.  More precisely, $\overline{\cnn} =
\f{n+k+2}(\ldots \f{1}()\ldots)$, where $\f{n+k+2},\ldots,\f{k+2}$ are
the original layers of $\cnn$, $\f{k+1}$ is a {\em perturbation layer} 
and $\f{k},\ldots,\f{1} = \tr[]{k},\ldots,\tr[]{1}$. The perturbation layer
helps to simulate subpixel transformations; this will be discussed below.
The transformational CNN is then encoded into
mixed-integer linear constraints
layer by layer. For each layer and transformation~$i$ we now give the
set of linear constraints $\con i$ characterising the various layers and 
transformations that we consider.

{\bf Photometric transformations.}  We begin by presenting the
mixed-integer linear encodings for photometric transformations. A
photometric transformation has two degrees of freedom: the factor
$\mu$ that handles the contrast of the image, and the factor $\nu$
which controls the luminosity of the image.
We assume the minimum value of a pixel is~0 and write $\pmax$ for its
maximum value.  The possible instantiations of a photometric
transformation are expressed by the following linear constraints.
\begin{align}
\lambda_d \geq \min_d(\domain i) \text{ and } \lambda_d \leq \max_d(\domain i),
d \in \dof(\tr[]{i}) \tag{$C_1$}
\end{align}
where each $\lambda_d$ is an LP variable controlling the values of the
factor $d$, and $\min_d(\domain i)$ ($\max_d (\domain i)$, respectively)
denotes the minimum (maximum, respectively) value for the factor~$d$ in
$\domain i$.  To encode the result of a photometric transformation on a
pixel we simply express the corresponding linear operation. 
\begin{align}
	&\tr[px]{i} (\vval[]{i-1}) = \mu \cdot \vval[px]{i-1}  + \nu \tag{$C_2$}
\end{align}

Given this, the linear encoding of a photometric
transformation~$\tr[]{i}$ is then given by the set of constraints
$\con{i} = C_1 \cup C_2$.

{\bf Affine transformations.} We now turn to translating affine
transformations. For each case we use a set of constraints of the form
$C_1$ capturing the set of possible instantiations of the
transformation. Also, for each case, we introduce a binary variable
$\pdelta[d]{i}$ for every instantiation~$d$. The variable represents
whether the corresponding instantiation is the one being applied. At
any time we assume the application of precisely one instantiation by
imposing the constraint:
\begin{align*}
	\sum_{d \in \domain i} \pdelta[d]{i} = 1 \tag{$C_3$} 
\end{align*}
We also force a bijection between the set of $\delta$ variables and
the instantiations they represent by assuming: 
\begin{align*}
\sum_{d \in \domain i}  d_j \cdot \pdelta[d]{i}  =
\lambda_j  
\text{ for each d.o.f. } j \text{ in } \tr[]{i}  \tag{$C_4$}
\end{align*}
So $\pdelta[d]{i} = 1$ iff, for each d.o.f. $j$, the LP variable
$\lambda_{j}$ representing~$j$ equals $d_j$.

We now present the encodings of each of the transformations. We begin
by considering
geometric
translation. A translation shifts the location of every pixel as per
the translation vector $(u',v')$:
\begin{align*}
	&\tr[u,v,r]{i}(\vval[]{i-1}) = \sum_{(u',v') \in \domain i}
	\vval[u+u',v+v',r]{i} \cdot 
\pdelta[u',v']{i} \tag{$C_{5}$}
\end{align*}
Therefore, the linear encoding for a translation~$\tr[]{i}$ is defined
as $\con{i} = C_1 \cup C_3 \cup C_4 \cup C_{5}$.

Next we consider subsampling. The transformation collapses
neighbourhoods of points to a point whose value is a statistic
approximation of the neighbourhood. Here we consider the arithmetic
mean value. The size of the neighbourhood is controlled by the scaling
factor~$d$.
\begin{align*}
	\tr[u,v,r]{i}(\vval[]{i-1}) = &\sum_{d \in \domain i}
	\mean(\vval[{[(u-1)d,u\cdot d],[(v-1)d,v\cdot d],r}]{i-1}) \cdot
	\pdelta[d]{i} \tag{$C_{6}$}
\end{align*}
It follows that the linear encoding for a subsampling~$\tr[]{i}$ is
defined as $\con{i} = C_1 \cup C_3 \cup C_4 \cup C_{6}$.

Lastly, we consider zooming. Zooming replicates the value of a pixel
to a rectangular neighbourhood of pixels. The value of the
neighbourhood is controlled by the scaling factor~$d$.
\begin{align*}
	\tr[u,v,r]{i}(\vval[]{i-1}) = \sum_{d \in \domain i} \vval[ \lceil
	\frac{u}{d} \rceil, \lceil \frac{v}{d} \rceil,r]{i-1} \cdot
	\pdelta[d]{i} \tag{$C_{7}$}
\end{align*}

Therefore, the linear encoding for a zooming~$\tr[]{i}$ is defined as
$\con{i} = C_1 \cup C_3 \cup C_4 \cup C_{7}$.


{\bf Perturbations.}  Next we describe the linear encoding of the
perturbation layer. The layer is introduced to simulate subpixel
transformations. Whereas interpolation can be recast to MILP so as  compute
the transformations, the number of binary variables needed for the encoding
is overwhelming w.r.t the performance of the verifier, as also showed by implementations
we developed. In contrast, the perturbation layer results in a
conciser program whereby only two inequalities per pixel are used. The
layer perturbs each pixel within an $L_\infty$  norm-ball. 
Given the radius of the norm-ball is sufficiently big, i.e., equal to the
variance of the image, the layer captures each  of the images 
 that could be computed via interpolations. Consequently it enables
the derivation of stronger robustness guarantees than the ones that could
be given by using a specific interpolation. The linear encoding of the
layer is given for each pixel $px$ by the following constraints.
\begin{align*}
	& \vval[px]{k+1}  - \vval[px]{k} \leq  \rho \tag{$C_{8}$}\\
	& \vval[px]{k+1}  - \vval[px]{k} \geq - \rho \tag{$C_{9}$}\\
\end{align*}
where $\rho$ is a given radius. 

We now consider the encodings of the layers of the CNN under question. We
separately translate these phase by phase.

{\bf Fully connected layer.}  We give the MILP description of the
weighted sum and $\relu$ functions. We do not directly represent the
$\argmax$ function since the function is not essential in expressing
the local
transformational
robustness properties that we are here concerned with (see below). The
weighted sum function is given be the following:
\begin{align*}
\ws_j(\vval[]{i-1}) = \weight[j]{i} \cdot  \vval[j]{i-1} + \bias[j]{i}
\tag{$C_{10}$}
\end{align*}
To capture the piecewise-linearity of the $\relu$ function we introduce a
binary variable $\pdelta[j]{i}$ per node~$j$ that represents whether the
output of the node is above~0~\cite{Akintunde+18}.
\begin{align*}
	&\relu_j(\vval[]{i-1}) \geq 0 \nonumber \tag{$C_{11}$} \\
	&\relu_j(\vval[]{i-1}) \geq \ws_j(\vval[]{i-1}) \tag{$C_{12}$} \\
	&\relu_j(\vval[]{i-1}) \leq \ws_j(\vval[]{i-1}) + M\pdelta[j]{i}
	\tag{$C_{13}$} \\
	&\relu_j(\vval[]{i-1}) \leq M(1 - \pdelta[j]{i}) \tag{$C_{14}$}
\end{align*}
Above $M$ is a sufficiently large number.  Therefore the linear encoding of
a fully connected layer~$i$ is defined by $\con{i} = C_{10} \cup \ldots
\cup C_{14}$.

{\bf Convolutional layer.} In addition to the $\relu$ phase, a
convolutional layer includes a convolution and a max-pooling phase.
Similarly to the weighted-sum function, a convolution is a linear
operation on the input of the layer and can be encoded by the following:
\begin{align*}
	\convfff{u,v}{i}{j}(\vval[]{i-1}) = \kernel{i}{j} \cdot
\vval[{[u,u'],[v,v'],[\s[3]{i-1}]}]{i-1} + \bias[j]{i}, \nonumber \\
u' = u+p-1, v' = v+q-1. \tag{$C_{15}$}
\end{align*}
A max-pooling function is parameterised over the size of the groups of
pixels over which the max-pooling is performed. Previous linear encodings
of the function use a binary variable per node in a
group~\cite{Dvijotham+18}. We here provide
an encoding that uses logarithmically less variables. Specifically, to select the maximum
value from a group we introduce a sequence of binary variables.
The sequence's binary number expresses the node in a group whose value is
maximum.  Since the size of the group is $p\cdot q$ we use $\lceil \log_2(p
\cdot q) \rceil$ variables. To facilitate  the presentation of the
corresponding linear constraints, we write $\mathbf n$ for the binary
representation of $n \in \mathbb Z$. We denote by $|\mathbf n|$ the number
of binary digits in $\mathbf n$. Given $j \in |\mathbf n|$, $\mathbf{n_j}$
expresses the $j$-th digit in $\mathbf n$ whereby the first digit is the
least significant bit.  If $j > |\mathbf n|$, then we assume that
$\mathbf{n_j}=0$. The linear representation of the max-pooling function for
a pixel $px = (px_{\alpha},px_{\beta},px_{\gamma})$ and pool size~$p\times
q$  is given by the
following.
\begin{align}
	&\maxpoolf{i}_{px} \geq  \convf[(px_{\alpha}-1)p +
	u',(px_{\beta}-1)q + v',px_{\gamma}]{i},
	\nonumber \\  & \hskip 4em u' \in [p], v' \in [v] \tag{$C_{16}$}\\
	&\maxpoolf{i}_{px} \leq  \convf[(px_{\alpha}-1)p + u',(px_{\beta}-1)q
	+ v',px_{\gamma}]{i} +
	\nonumber \\
	&\hskip 4em M\sum_{j \in [|\mathbf{p \cdot q}|]} \mathbf{z_j} +
	(1-2\mathbf{z_j})\pdelta[px,j]{i}, \nonumber \nonumber \\
	&\hskip 4em u' \in [p], v' \in [v],  z  = (u'-1)q + v' -1
	\tag{$C_{17}$}
\end{align}
where  $\pdelta[px,1]{i},\ldots,\pdelta[\lceil \log_2(px,p \cdot
q) \rceil]{i}$  are the binary variables associated with $px$. 
For the case where $p \cdot q$ is not a power of~2 we require from the
number represented by $\pdelta[px,1]{i},\ldots,\pdelta[\lceil \log_2(px,p \cdot
q) \rceil]{i}$ to lie within $0,\ldots,p \cdot q -1$. The constraint is
formally expressed as follows. 
\begin{align}
&\sum_{j \in |\mathbf{p \cdot q}|} \mathbf{z_j} + (1 - 2 \mathbf{z_j})
	\pdelta[px,q]{i} \geq 1, \nonumber \\ &\hskip 4em z \in [p \cdot
	q,2^{|\mathbf{p\cdot q}|}-1] \tag{$C_{18}$}
\end{align}
\begin{example} 
Assume the application of the max-pooling function on a group of nodes
$x_1,\ldots,x_4$. Let $y$ denote the output of the function. Its MILP
encoding as per~(16)-(19) uses two binary variables $\delta_1$, $\delta_2$
and is given by the following.
\begin{align*}
&	y \geq x_j, j \in [4] \\
&	y \leq x_1 + M(\delta_1 + \delta_2) \\
&	y \leq x_2 + M(1 - \delta_1 + \delta_2) \\
&	y \leq x_3 + M(\delta_1 + 1 - \delta_2) \\
&	y \leq x_4 + M(1 - \delta_1 + 1 - \delta_2)
\end{align*}
\end{example}

The linear encoding of a convolutional layer~$i$ is defined by $\con{i} =
C_{15} \cup \ldots \cup C_{18}$. 

Given the above, the linear encoding of a transformational CNN can be
defined as the union of the linear constraints characterising its
layers.


\begin{definition}[Linear encoding of transformational CNNs]
	The linear encoding of a transformational CNN
	$\overline{\cnn} = f^n(f^{n-1}(\ldots f^1()\ldots))$ is defined by
	$\con{\overline{\cnn}} = \con{1} \cup \ldots \cup \con{n}$.
\end{definition}

We can exploit the above encoding to construct a linear program whose
solution corresponds to a solution of the robustness problem for the
CNN under analysis. Indeed, we can extend the above encoding to
represent the local robustness property.

{\bf Local transformational robustness.} We now define linear
constraints to determine whether there are instantiations for the
transformations of the original image $\im$ whose output is classified
differently than $\im$. More specifically, we check whether there is a
linear activation in the output layer that is larger than the
activation associated with the class $l_{\im}$ of $\im$.  Similarly to
the max-pooling function we express this by introducing a sequence of
size $\lceil \log_2c \rceil$ of binary variables; the sequence's
binary number $b$ denotes the node from the output layer that is
associated with class $b+1 \in [c]$ and whose linear activation is
larger than the linear activation of the node associated with
$l_{\im}$.
\begin{align*}
	&\ws_{l_{\im}}(\vval[]{n-1}) \leq \ws_j(\vval[]{n-1})    +  \\
	& \hskip 1em 	M\sum_{k \in
	|\mathbf{c}|} \left( \mathbf{j_k} + (1 - 2 \mathbf{j_k})
	\pdelta[k]{n}\right), \\
        &  \hskip 1em j
	\in [0,c-1] \setminus \set{l_{\im}-1}  \tag{$C_{19}$}
\end{align*}
Again, we prevent $\pdelta[0]{n},\ldots,\pdelta[\lceil \log_2 c \rceil]{n}$ from
representing   $l_{\im}-1$ or any number greater than $c-1$:
\begin{align*}
	&\sum_{k \in |\mathbf{c}|} \mathbf{j_k} + (1 - 2 \mathbf{j_k})
	\pdelta[k]{n} \geq 1, \\
        	& \hskip 1em j \in \set{l_{\im}-1} \cup [c,2^{|\mathbf{c}|}-1]
	\tag{$C_{20}$}.
\end{align*}

The linear encoding of local robustness is then defined by $\con{lrob} =
C_{19} \cup C_{20}$. 

In summary, we have shown that all the affine and photometric
transformations considered are naturally captured by appropriate sets
of linear constraints. Moreover, we have shown that the property of
local transformational robustness can naturally be encoded into a set
of linear constraints. We now give the main result of this section by
formally linking local transformational robustness of a given CNN with
respect to a particular image and a transformation and the lack of a
solution for the corresponding MILP problem.

\begin{theorem}
	Let $\cnn$ be a CNN, $\mathfrak T$ a transformation with
	domain $\mathcal D$, and $\im$ an image.  Let $LP$ be the
	linear problem defined on objective function $obj=0$ and set
	of constraints
	$\con{\im} \cup \con{\overline{\cnn}} \cup \con{lrob}$, where
	$\con{\im} \triangleq \vval{0} = \im$ fixes the input of
	$\mathfrak T$ to $\im$.  Then $\cnn$ is locally
	transformationally robust for $\mathfrak{T}$ and $\im$ iff
	$LP$ has no solution.
\end{theorem}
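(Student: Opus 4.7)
The plan is to prove the biconditional by establishing a correspondence between feasible assignments of $LP$ and concrete witnesses of non-robustness, proceeding by the contrapositive: $LP$ is feasible iff there exists $d \in \mathcal{D}$ with $\cnn(\mathfrak{T}[d](\im)) \neq \cnn(\im)$. I would build this up through a sequence of block-level correctness lemmas, one per family of constraints, and then compose them layerwise following the definition of $\con{\overline{\cnn}}$.

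For the transformation layers, constraints $C_1, C_3, C_4$ together implement a bijection between admissible concretisations $d \in \domain{i}$ and assignments of the selector bits $\pdelta[d]{i}$ and the LP variables $\lambda_j$: $C_3$ forces exactly one $\pdelta[d]{i}$ to be active, and $C_4$ pins the $\lambda_j$ to the coordinates of that active $d$, so that once these are fixed the geometric or photometric constraint ($C_2$, $C_5$, $C_6$, or $C_7$) reduces pixelwise to the exact definition of $\mathfrak{T}[d]$ applied to $\vval{i-1}$. The perturbation constraints $C_8, C_9$ then enclose $\vval{k+1}$ in an $L_\infty$-ball of radius $\rho$ around $\mathfrak{T}[d](\im)$, which by the discussion preceding the theorem contains every admissible subpixel interpolation. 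For fully connected layers I would appeal to the standard big-$M$ correctness of the $\relu$ encoding ($C_{10}$--$C_{14}$). For convolutional layers, the convolution phase is linear and is directly encoded by $C_{15}$; for the logarithmic max-pool encoding ($C_{16}$--$C_{18}$) I would verify that, for every bit pattern admitted by $C_{18}$, exactly one constraint in $C_{17}$ collapses to a tight inequality at the selected neighbour while the others are slackened by at least $M$, so together with $C_{16}$ this pins $\maxpoolf{i}_{px}$ to the neighbourhood maximum. The analogous argument for $C_{19}, C_{20}$ shows that this pair is jointly feasible iff there is some $j \neq l_{\im}-1$ with $\ws_j(\vval{n-1}) \geq \ws_{l_{\im}}(\vval{n-1})$, i.e., the CNN's output layer elects a label other than $l_{\im}$.

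With these lemmas in hand, the two directions follow by routine composition. From a feasible $LP$ assignment I would read off the active selectors in every transformation layer to recover a $d \in \mathcal{D}$, interpret the perturbation residue as an admissible interpolation of $\mathfrak{T}[d](\im)$, and use $C_{19}, C_{20}$ to conclude that the resulting image is misclassified by $\cnn$, proving non-robustness. Conversely, given any witness $d$ and a misclassified image, I would instantiate all LP variables to their semantic values (layer activations, $\relu$-indicator bits, max-pool selectors, and the binary encoding of the winning class) and observe that each block of constraints is satisfied by construction.

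The main obstacle will be twofold. First, the big-$M$ constants must be calibrated so that no inequality is ever spuriously relaxed; this requires deriving uniform a priori bounds on the intermediate activations in terms of $\pmax$, the kernel and bias parameters of the trained $\cnn$, and the perturbation radius $\rho$. Second, the tightness of the biconditional depends on reading the perturbation layer as \emph{exactly} the set of images obtainable by subpixel interpolation of $\mathfrak{T}[d](\im)$, whereas the surrounding discussion only establishes that $\rho$ equal to the image variance is sufficient for one containment; a clean formal write-up therefore either strengthens this calibration argument or restricts the claim to robustness modulo $\rho$-perturbations, the interpretation I would adopt in the proof.
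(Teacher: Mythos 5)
Your proposal takes essentially the same route as the paper's proof: both directions rest on the same witness correspondence (reading the d.o.f.\ values $\lambda_j$ off a feasible solution to recover $d\in\mathcal D$, and conversely instantiating every LP variable with the semantic value computed by the network on $\mathfrak T[d](\im)$), with your block-level encoding lemmas simply making explicit what the paper compresses into ``by the definition of $LP$''. Your closing caveat is apt: the paper's proof treats the perturbation layer as transparent, so the stated biconditional is strictly justified only under the $\rho$-ball reading of the transformation semantics (or for $\rho=0$), which is exactly the restriction you propose to adopt.
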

\begin{proof}
	Let $\overline{\cnn} = \f{n+1}(\ldots \f{1}(\im)\ldots)$.  
	
	For the left to right direction assume that $LP$ has a
	feasible solution. Consider $d = (\lambda_{j} \colon
	j \in \dof(\mathfrak T))$, where each $\lambda_{j}$ is the
	value of the d.o.f. $d_j$ of $\mathfrak{T}$ in the solution.
	By the definition of $LP$ we have that $\cnn(\mathfrak T[d](\im))
	= \argmax(\vval[]{n+1})$. By the definition of $\con{lrob}$
	there is $l$ with $l \neq l_{\im}$ and
	$\vval[l]{n+1} \geq \vval[l_{\im}]{n+1}$. Therefore
	$\cnn( \mathfrak T[d](\im)) \neq l_{\im}$, and
	therefore $\cnn$ is not locally transformationally robust.

	For the right to left direction suppose that $\cnn$ is not
	locally transformationally robust. It follows that there is $d \in \mathcal D$
	such that $\cnn( \mathfrak T[d](\im)) \neq l_{\im}$. Then,  the assignment
	$\lambda_{j} = d_j$, for each d.o.f $j$ of $\mathfrak T$ and
	$\vval[j]{i} = \left( \f{i}(\ldots\f{1}(\im)\ldots)\right)_j$ is a
	feasible solution for $LP$.
\end{proof}

\begin{table*}[!t]
\centering
\begin{scriptsize}
\begin{tabular}{|c|c|c||c|c|c||c|c|c||c|c|c|}
	\hline
	\multicolumn{3}{|c|}{{\bf Translation}} &   
	\multicolumn{3}{|c|}{{\bf Subsampling}} &  
	\multicolumn{3}{|c|}{{\bf Zooming}}  &
	\multicolumn{3}{|c|}{{\bf Photometric}} 
	
	\\
		\hline
		\hline	
		$\mathcal D$ & 
			$\#\checkmark$ (s) & 
			LTR &  
		$\mathcal D$ 
			& $\#\checkmark$ (s) 
			& LTR &
		$\mathcal D$ 
			& $\#\checkmark$ (s) 
			& LTR &
		$\mathcal D$ 
			& $\#\checkmark$ (s) 
			& LTR 
	\\
	\hline
	\hline
		$[-1,1]^2$
			& $5 (48)$  
			& $0$
		& $[2,3]$ 
			& $97 (2)$ 
			& $0$
		& $[2,3]$
			& $94 (3)$
			& $4$ 
		& $[0.01,0.03] \times [0.99,1.01]$ 
			& $100 (1.1)$ 
			& $2$
	\\
	\hline
		$-3,3]^2$
			& $63 (15)$  
			& 0 &  
		$[2,5]$ 
			& $96 (3)$ 
			&0 &
		$[2,5]$
			& $90 (1)$
			& 7 &
			$[0.01,0.05] \times [0.97,1.03]$ 
			&  $100 (8.23)$ 
			& 2 
	\\
	\hline
		$[-5,5]^2$
			& $87 (26)$  
			& 0 &  
		$[2,7]$ 
			& $95 (2)$ 
			& 0 &
		$[2,7]$ 
			& $97 (4)$
			& 2 &
		$[0.01,0.07] \times [0.9,1.1]$ 
			& $100 (8.36)$ 
			& 2 
	\\
	\hline
		$[-7,7]^2$
			& $90 (22)$  
			& 0 &  
		$[2,9]$ 
			&$96 (3)$ 
			& 7 
		& $[2,9]$ 
			& $96 (3)$ 
			& 7 &
		$[0.1,0.5] \times [0.7,1.3]$
			& $100 (10.05)$ 
			& 2 
	\\
	\hline
		$ [-9,9]^2 $ 
			& $90 (20)$  
			& 0 &  
		$[2,11]$ 
			& $93 (1)$ 
			& 0 &
		$[2,11]$ 
			& $96 (3)$ 
			& 7 &
		$[0.1,0.7] \times [0.5,1.5]$
			& $97 (5.51)$ 
			& 0 
\\
\hline
\end{tabular}

\end{scriptsize}

\caption{Experimental results.}
\label{table:exp}
\end{table*}

Following the above the local transformational robustness of CNNs can
be established whenever the corresponding MILP problem has no
solution.  Otherwise, if a feasible solution for the problem can be
found, then the solution corresponds to the instantiations of the
transformations under analysis that violate the robustness of the
CNN. From these, it is possible to determine the sets of images 
responsible for the violation. These exemplars are
often called adversarial examples in related literature.  Finally note
that the result holds for the composition of any finite number of
transformations considered in the paper.

We conclude by noting that finding adversarial examples can be used to
improve the accuracy of the classifier, e.g., via retraining. On the other
hand showing
local transformational robustness can be used to build a formal safety
case for the perception system in question.

\section{Implementation and experimental results}\label{sec:experiments}

We have implemented the procedures from the previous section as a
toolkit \texttt{VENUS}~\cite{Venus18}. The toolkit takes as input the
descriptions of a CNN and a sequence of transformations against which
the local transformational robustness of the CNN is meant to be
assessed. Following this \texttt{VENUS} builds the linear encoding of
the verification query layer by layer and following all the
transformations in the input by adding the constraints discussed in
the preceding Section.

Having constructed the linear program the verifier invokes the Gurobi
checker~\cite{Gurobi+16a} to ascertain whether the program admits a
solution. The {\em satisfiability} output of the latter corresponds to
the violation of the local transformational robustness property of the
CNN, whereas the {\em unsatisfiability} output can be used to assert
the CNN is transformationally locally robust.

We have tested \texttt{VENUS} on networks trained on the MNIST
dataset~\cite{Lecun+98} using the deep learning toolkit
\texttt{Keras}~\cite{Chollet15}.
To the best of our knowledge currently there are no other methods or
tools for the same problem; so, in what below we report only the
results obtained with \texttt{VENUS}.
In  the experiments we fixed a CNN of 1481 nodes, with a convolutional layer of~3
convolutions with kernels of size~$15 \times 15$ and pool-size~$2
\times 2$, and an output layer with~$10$ nodes. The accuracy of the
network on the MNIST dataset is~93\%.  To check the network's
transformational local robustness we selected~100 images for which the
network outputs the correct classification label. We then performed
experiments for all of the transformations with varying domains for
each of their degrees of freedom.  The experiments were run on a
machine equipped with an i7-7700 processor and 16GB of RAM and running
Linux kernel version~4.15.0.  Table~\ref{table:exp} summarises the
results.

For every transformation and its domains considered, we report the
number of images that have been verified (irrespective of whether
these were shown robust) under the timeout of~200s followed by the
average time taken for verifying said images. This is indicated in the
column \#$\checkmark$ (s). For example, for subsampling
transformations on domain $[2,3]$, the method could verify 97 images
out of 100 with an average time of 2 secs. Following this, in the LTR
column we report the number of these that were assessed locally
transformationally robust.

Note that there is some variability in the results. For example,
several images could not be assessed by the timeout for the
translation with domain $[-1,1]$ but many more could be analysed under
the translation domain $[-3,3]$. This is likely to be due to
optimisations carried out by Gurobi which can be applied in some cases
only. Indeed, note in general that an increase in the range of the
domains does not lead to longer computations times, since the
resulting linear program is only marginally extended.

In summary, the results show that the CNN built from the MNIST dataset
is not locally transformationally robust with respect to translation,
subsampling and zooming, returning different classifications even for
small transformational changes to the input. The CNN appears just as fragile in terms of
luminosity and contrast changes.
Overall, the
results show that the CNN in question is brittle with respect to
transformational robustness. We stress that the aim of this work is
not to build robust CNNs, but to provide a provably correct and
automatic method that can be used to make this assessment
systematically. If the CNN that we analysed was intended to be used in
as part of a perception module in a safety-critical autonomous system,
our analysis would suggest that the CNN may be inadequate.

Still, in the spirit of adversarial retraining, the technique here
described can be used to augment the training set so as to improve the
robustness of the network under analysis~\cite{Goodfellow+18}. To test
this we trained a CNN with images from the MNIST dataset (60000
images) and recorded its classification accuracy as 19\% on
transformed MNIST images. We
then used \texttt{VENUS} to generate over~10000 transformed images
(for compositions of transformations here considered)
which we appended to the training set. The augmented set was then used
to retrain the network resulting in a classification accuracy of~26\%.
This exemplifies \texttt{VENUS} not only as a verification tool but
also as a tool to improve the robustness of CNNs.


We conclude by noting that we conducted further tests with images of
different sizes. As expected, we found that the effectiveness of the
toolkit is hindered by increases in the size of the network, as these
lead to heavier MILP encodings. This has already been reported in the
literature~\cite{Dvijotham+18} and points to further work to improve
scalability further.

\section{Conclusions}\label{sec:conclusions}
Forthcoming autonomous systems are expected to rel. 
aspects of modern society. Several of these, including autonomous
vehicles, are effectively safety-critical systems, i.e., they are
systems in which malfunctions may lead to loss of life. Differently
for other non-autonomous safety-critical systems, such as avionics
systems, no methods presently exist to give formal guarantees to large
classes of autonomous systems. Providing formal guarantees for autonomous
systems in which learning-enabled components are present (i.e.,
components synthesised from data via machine learning methods) is
presently an open challenge.  We believe that research is urgently
needed to address this deficiency.

To this end, in this paper we have considered a class of perception
systems often present in autonomous systems, i.e., CNN-based
classifiers commonly used in the perception modules of autonomous
systems. While CNNs are regularly used in various applications where
their correctness and accuracy is not essential, their use in
autonomous systems requires a high degree of assurance. Typically this
is either directly provided by, or aided by formal methods, as in the
case of DO-333/ED-216 in avionics. This is also likely to be required
for certification purposes of the system.

To this end, we have introduced a notion of robustness, called local
transformational robustness, which can be used to give guarantees on
the perception system under analysis. Local transformational
robustness encodes the ability of a classifier to withstand variations
of the image in question with respect to its classification. These
modifications, encoded mathematically as various kinds of
transformations, are meant to capture those induced by the normal
variability of devices and environment conditions at runtime. We
developed a method to assess local transformational robustness
automatically by translating this to MILP problems. The experimental
results showed that the toolkit we introduced
returns correct and viable results against the MNIST dataset for 
all the transformations considered.

In further work, firstly, we plan to improve the performance of the
back-end for the problems considered including experimenting with other
technologies such as SMT and mixed approaches. Secondly, while we focused
on local properties in this paper, i.e., properties concerning a single
image, we intend to extend the work to study (global) robustness  with
respect to classes of images. Lastly, while the paper focused on images,
the notion of local transformational robustness is general to be applied in
other contexts where CNNs are used including audio recognition.

 {\bf Acknowledgements. }  The authors would like to thank Daniel
 Rueckert and Rahul Sukthankar for comments on a previous version of
 this article. This work is partly funded by DARPA under the Assured
 Autonomy programme.

{\small
\bibliographystyle{ieee}
\bibliography{../../../bib.bib}
}

\end{document}